\documentclass[letterpaper]{article} %
\usepackage{aaai2027}  %
\nocopyright  %
\usepackage[hyphens]{url}  %
\usepackage{graphicx} %
\usepackage{natbib}  %
\usepackage{caption} %
\usepackage{amsmath,amssymb}   %
\usepackage{tikz}
\usetikzlibrary{arrows.meta, positioning, fit, backgrounds}
\usepackage{booktabs}          %
\usepackage{threeparttable}    %
\usepackage{multirow}          %
\usepackage{float}             %
\newcommand{\ours}[1]{\textbf{#1}}
\newcommand{\snd}[1]{\underline{#1}}
\newcommand{\ci}[2]{{\scriptsize[#1,\,#2]}}

\newcommand{\shell}{\textsc{shell}}
\usepackage{amsthm}
\theoremstyle{plain}\newtheorem{proposition}{Proposition}
\theoremstyle{remark}\newtheorem*{remark}{Remark}
\title{SCORE: Shape-Conforming Regions for Flight in\\
Enclosed, Degraded Environments}
\author{
    Eric Minwoo Kim,
    Jong-Kook Kim
}
\affiliations{
    School of Electrical Engineering, Korea University, Seoul, Republic of Korea\\
    eric2022@korea.ac.kr, jongkook@korea.ac.kr
}

\begin{document}
\maketitle

\begin{abstract}
Autonomous UAVs enter enclosed environments such as caves and collapsed structures that
confine the vehicle and degrade perception. Conformal prediction provides a
distribution-free guarantee by calibrating how far an obstacle keep-out must expand to
absorb perception error at a target coverage level. However, existing keep-out regions use
convex primitives whose bulges consume narrow passages and grow as perception degrades.
Our main contribution defines the nonconformity score on a signed distance field (SDF).
This produces a non-convex keep-out that tightly follows obstacle geometry and avoids the
unnecessary bulging of equal-margin convex regions. Two supporting components keep this
geometry usable as perception degrades. First, a voxelwise union of complementary sensor
observations certifies voxels that any single sensor misses. Second, the margin around the
obstacle adapts to measured visibility without weather labels or the online ground-truth
feedback that single-pass flight cannot provide. Results on real
subterranean data show that the resulting distribution-free, shape-conforming keep-out
retains more usable free space than convex baselines at the same certified coverage, and
produces safer closed-loop flight.
\end{abstract}

\section{Introduction}
Autonomous UAVs are increasingly the only vehicles able to explore enclosed environments such
as caves, mines, and collapsed structures that people cannot safely enter
\cite{tranzatto2022,subtmrs2024}. These environments degrade perception, as collapse raises
dust, fire produces smoke, and caves hold water vapor and darkness
\cite{subtmrs2024,bijelic2020}. Confinement compounds the difficulty, because a narrow passage
offers no wider berth to divert into, and an overly conservative plan sacrifices the mission
rather than a little efficiency. Safe flight in this regime hinges on how perceptual
uncertainty becomes a planning margin.

An autonomy stack plans against a perceived estimate of obstacle geometry, and that estimate
is never exact. The residual gap to ground truth widens as perception degrades
\cite{bijelic2020,perceivewc2024}. A safe margin must therefore be calibrated rather than
merely generous, because too large a margin seals the passage while too small a margin strikes
the wall. Conformal prediction (CP) supplies such a margin, a distribution-free guarantee that
the true obstacle lies within the certified keep-out, and prior work applies it to planning
\cite{vovk2005,angelopoulos2023,lindemann2023,cleaveland2024}. Existing keep-outs inflate
convexly \cite{lindemann2023,tumu2024,shin2026}, and in an enclosed space the obstacle
boundary wraps the free space, and the convex bulge fills the very pocket the vehicle must fly.
As degradation forces the margin to grow and enclosure removes the room to absorb it, the
passage seals and the planner freezes \cite{trautman2010}. The open problem is therefore not
whether to certify, which CP already does, but the shape of the certified region.

This paper makes the shape of the conformal keep-out its central variable. The proposed
keep-out is the level set of a per-voxel signed distance field (SDF)
\cite{oleynikova2017,egoplanner2021}, a non-convex region that hugs concave obstacle geometry
rather than bulging past it, and it is provably contained in every equal-margin convex region
(Proposition~\ref{prop:geom})~\cite{wang2023pcp}. Two further components make this geometry
usable as multi-sensor perception degrades. The per-sensor distance fields are fused per voxel
by a union that certifies cells no single sensor can bound, and the margin is conditioned on
measured visibility without weather labels.

This paper makes three contributions.
\begin{enumerate}
  \item A non-convex SDF-shell conformal keep-out, provably $\subseteq$-tighter than any convex
  region at equal margin, and empirically freeing $1.10\times$ the usable space of the convex
  hull at matched $0.90$ coverage.
  \item A per-voxel conformal fusion rule whose value is coverage rather than tightening,
  because the union certifies cells any single sensor misses and no per-cell alternative
  meaningfully tightens it.
  \item A label-free severity conditioning that sizes the margin from measured visibility,
  matching label-needing adaptive conformal prediction.
\end{enumerate}
Section~\ref{sec:related} reviews related work and Section~\ref{sec:prelim} the problem
formulation. Section~\ref{sec:method} develops the method, and Section~\ref{sec:exp} reports the
offline and closed-loop evaluation.

\section{Related Work}
\label{sec:related}
Conformal prediction for planning wraps obstacles or predicted agent states in
coverage-guaranteed regions \cite{lindemann2023,cleaveland2024}. Existing constructions
typically use convex primitives or templates \cite{tumu2024}, which waste free space around
non-convex obstacles and do not condition their margins on degraded multi-sensor perception.
Concurrent field-level work conformalizes a residual distance field for dynamic-obstacle
prediction \cite{shin2026}, but assumes a ball robot footprint expressly to sidestep the
generally non-convex obstacle geometry that is our subject, and addresses neither multi-sensor
fusion nor degradation.

A second line adapts margin size rather than region shape. Adaptive conformal inference and its
planning and control extensions \cite{gibbs2021,dixit2023,cuqds2025} update conformal thresholds
from per-step feedback, which requires realized errors unavailable during single-pass flight.
Related methods certify learned perception \cite{perceivewc2024} or learn context-aware
nonconformity scores \cite{lcp2025}, but do not isolate shape-conforming keep-out geometry for
degraded multi-sensor sensing. Our margin is instead conditioned feed-forward on an observed
covariate, without weather labels at deployment, and the headline contribution remains the
region shape.

The controlled geometry baselines are therefore an axis-aligned box, an oriented box, and a
convex-hull instantiation of the convex-template class \cite{tumu2024}. Each is calibrated by
the same protocol to the same target coverage, and the comparison therefore isolates region
shape. Methods addressing dynamic-obstacle prediction or learned-perception uncertainty are
positioned conceptually rather than benchmarked head-to-head.

\section{Preliminaries and Problem Formulation}
\label{sec:prelim}
\paragraph{Split conformal prediction.} Given exchangeable nonconformity scores
$R^{(1)},\dots,R^{(k)}$ and a test score $R^{(0)}$, the
$\lceil(k{+}1)(1-\alpha)\rceil$-th smallest calibration score $C$ satisfies
$\Pr(R^{(0)}\le C)\ge 1-\alpha$ \cite{vovk2005}. In planning, $R$ is the signed
clearance error, defined as predicted minus true clearance, and the keep-out
inflates the obstacle by $C$.
\paragraph{Exchangeability and the observed band.} The guarantee is marginal over
calibration and test and requires exchangeability. Two scoping choices make it honest
here. Coverage is claimed only on the observed band (A3), meaning within field of
view, unoccluded, and in range, because CP cannot certify unperceived obstacles, which
is a property of the theorem rather than a fixable gap. Exchangeability is obtained by
calibrating on held-out static scenes that are disjoint by map, and error histograms are
checked by condition.
\paragraph{Problem formulation.} A UAV flies a static, cluttered environment. Perception
produces per-sensor clearance fields $\hat\varphi_s$ against the true field $\varphi$,
and the dangerous error is the one-sided $A_s=\max(0,\hat\varphi_s-\varphi)$, because
estimating clearance too high is the unsafe direction. Degradation is indexed by a measured
severity $\beta=2.996/\mathrm{MOR}$ that non-stationarily scales the $A_s$ distribution
and can blind a sensor on part of the scene. A blinded sensor cannot certify coverage there
at any finite margin, which makes multi-sensor fusion a necessity for validity rather than an
optimization for tightness. The goal is a keep-out that is valid at level $1-\alpha$ on the observed band
and as navigable as possible given that validity, spending the certified margin without
sealing corridors. The per-voxel union that fuses the sensor fields supplies coverage that no
single sensor can.

\section{Proposed Method}
\label{sec:method}
\subsection{Overview}
\begin{figure}[t]\centering
\resizebox{\columnwidth}{!}{%
\begin{tikzpicture}[
  font=\small,
  >={Stealth[length=1.5mm]},
  b/.style={draw=black!55, rounded corners=2pt, align=center, inner sep=3pt},
  mod/.style={b, fill=blue!6, text width=30mm},
  head/.style={b, fill=orange!15, draw=orange!60, text width=32mm},
  io/.style={b, fill=black!5, text width=34mm},
  ar/.style={-{Stealth[length=1.5mm]}, black!60},
  lbl/.style={font=\scriptsize\itshape, text=black!55}
]
\node[io] (in) at (0,0) {Degraded multi-sensor perception (LiDAR + thermal, fog)};
\node[mod] (fus) at (-2.4,-2.2) {\textbf{Per-Voxel Fusion}\\[1pt]{\scriptsize union across sensors, covers cells one sensor misses}};
\node[mod] (cond) at (2.4,-2.2) {\textbf{Severity Conditioning}\\[1pt]{\scriptsize visibility $\to$ margin $\delta$, widens as visibility drops}};
\node[head] (shell) at (0,-4.5) {\textbf{SDF Shell keep-out} {\scriptsize(headline)}\\[1pt]$\{\varphi\le r_{\mathrm{safe}}+\delta\}$\\[1pt]{\scriptsize non-convex, hugs the wall}};
\node[io] (mpc) at (5.7,-4.5) {MPC planner $\to$ safe trajectory};
\draw[ar] (in) -- (fus);
\draw[ar] (in) -- (cond);
\draw[ar] (fus) -- node[lbl,above,sloped]{fused field} (shell);
\draw[ar] (cond) -- node[lbl,above,sloped]{margin $\delta$} (shell);
\draw[ar] (shell) -- (mpc);
\end{tikzpicture}%
}
\caption{The proposed pipeline. Degraded multi-sensor perception feeds a per-voxel conformal
fusion and a label-free severity conditioning, and the resulting field defines the non-convex
SDF shell keep-out that a sampling-based MPC planner threads.}
\label{fig:method}
\end{figure}
The keep-out is the region around each obstacle that the planner must avoid, sized to keep the
true obstacle inside it. The method builds it in three parts (Figure~\ref{fig:method}). The
first sets its shape, reading the region from the perceived distance to the nearest surface
rather than a convex box or ball that bulges into free space, which lets it hug the obstacle and
leave narrow passages open. The other two keep it reliable as perception weakens. One fuses
several sensors, covering any cell a single sensor misses. The other sets how far it extends,
widening as visibility drops and reading that visibility from an observable signal rather than
weather labels or feedback a single flight cannot supply. Each part is developed in turn below.

\subsection{SDF Shell Geometry}
\label{sec:shell}
\begin{figure}[t]\centering
\includegraphics[width=\columnwidth]{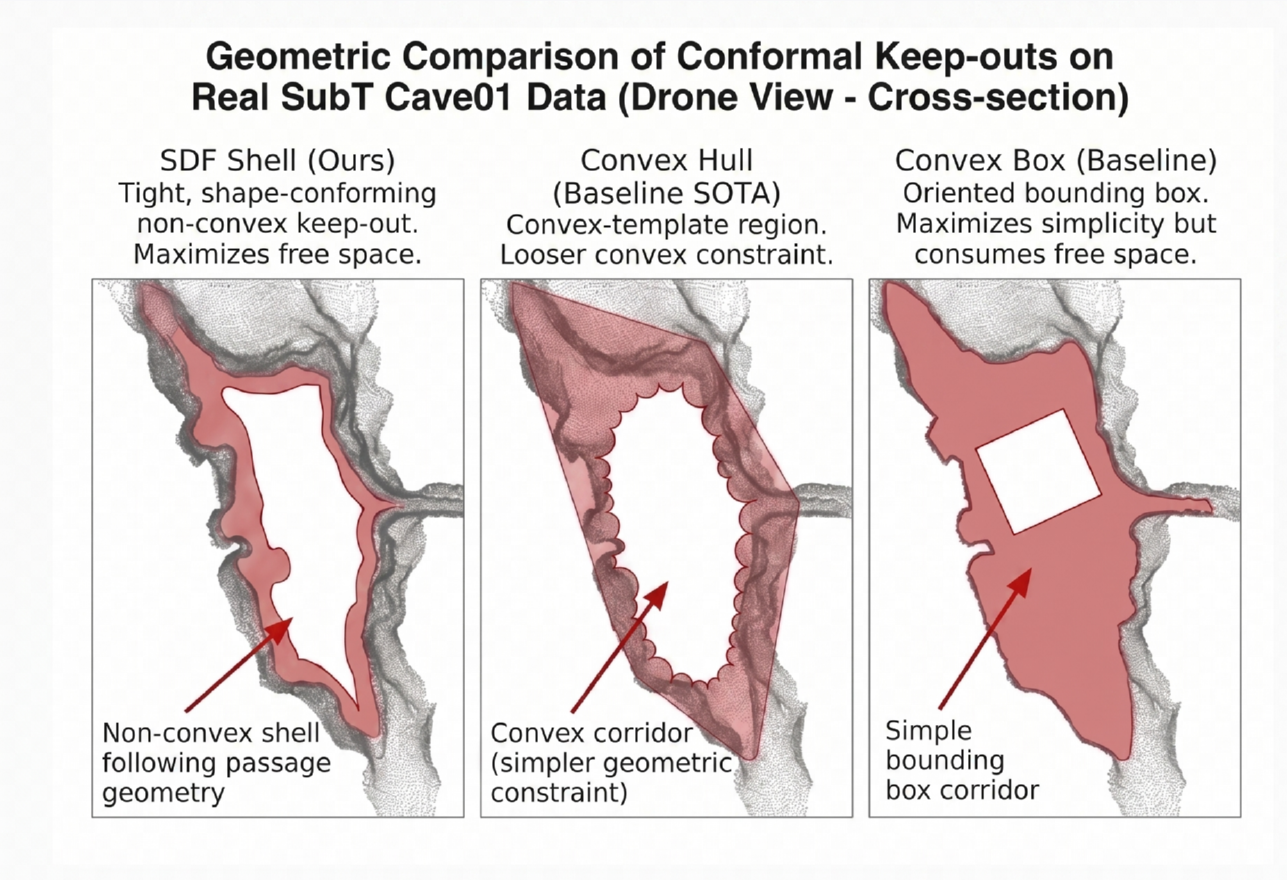}
\caption{Conformal keep-out geometry on a cave passage, a conceptual illustration. The
non-convex shell hugs the wall and leaves the widest corridor, the convex hull bulges inward at
the bends, and the axis-aligned box collapses the corridor to a small rectangle.}
\label{fig:hero}
\end{figure}
The geometry of a conformal region is not chosen directly. It is induced by the
nonconformity score, because for a score $s$ and calibrated quantile $q$ the region is
the level set $\{y : s(y)\le q\}$. The convex keep-outs in common use are exactly the
level sets of convex scores, an $\ell_\infty$, Mahalanobis, or norm score inducing a
box, ellipsoid, or ball. Region
geometry is therefore a \emph{score-design} question, and the conservatism of convex
keep-outs is a property of the scores that induce them.

The score here is defined on the \emph{distance field} rather than on a scalar
per-obstacle margin. Let $\hat\varphi(x)$ be the predicted clearance, that is the
distance to the nearest obstacle surface at a world point $x$, and let $\delta$ be the
calibrated margin from severity conditioning. The induced region, termed the
\emph{shell}, is
\[
  \mathcal S=\{\,x:\hat\varphi(x)\le r_{\mathrm{safe}}+\delta\,\},
\]
the sublevel set of the distance field thickened by $r_{\mathrm{safe}}+\delta$. Because
the score is evaluated against the obstacle field itself, its level set conforms to
concave and elongated geometry instead of bulging past it. The shape is derived from
measured structure rather than learned or assumed, which is what makes the ordering
below deterministic rather than empirical.

The choice of a spatial score is deliberate. A rule reading a single cell's own per-sensor
outputs is capped at the sensor overlap, and escaping that cap requires a score that reads
beyond the cell, which the distance field supplies because $\hat\varphi(x)$ is determined by
obstacle structure around $x$ rather than by one cell's measurement.

\begin{proposition}[Equal-margin $\subseteq$-dominance]\label{prop:geom}
Fix a common obstacle set $O$ and a common margin $t=r_{\mathrm{safe}}+\delta$. The shell
keep-out is the sublevel set $\mathcal S(t)=\{x:\hat\varphi(x)\le t\}=O\oplus B(t)$, and every
convex keep-out is a convex outer approximation of $O$ dilated by the same $t$. Then
\[
  \mathcal S(t)\subseteq\mathrm{hull}(t),\qquad
  \mathcal S(t)\subseteq\mathrm{obb}(t),\qquad
  \mathcal S(t)\subseteq\mathrm{box}(t),
\]
and at equal margin the shell therefore leaves at least as much free space as any convex
keep-out (Figure~\ref{fig:hero}).
\end{proposition}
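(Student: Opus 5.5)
The argument reduces to monotonicity of Minkowski dilation under set inclusion. I will make the setup explicit first. Take $\hat\varphi(x)=\mathrm{dist}(x,O)=\inf_{o\in O}\|x-o\|$ for $x$ outside $O$, and $\hat\varphi\le 0$ on $O$. Take $O$ closed, or else use the closure, so that the infimum is attained. With these conventions, $\{x:\hat\varphi(x)\le t\}$ is exactly the set of points within distance $t$ of $O$, which equals $O\oplus B(t)$ with $B(t)$ the closed ball of radius $t$. This justifies the identity in the statement.

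Next I will pin down what ``convex keep-out at the same margin'' means. Each baseline is $K\oplus B(t)$, where $K$ is a convex set containing $O$. For the hull, $K=\mathrm{conv}(O)$. For the oriented box, $K$ is a box in a fitted frame that contains $O$. For the axis-aligned box, $K$ is the bounding box of $O$. If a baseline instead inflates by $t$ in the $\ell_\infty$ sense, so that $K\oplus[-t,t]^d$, the argument still goes through, because $B(t)\subseteq[-t,t]^d$.

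The core step is a one-line inclusion. Since $O\subseteq K$, any $x\in O\oplus B(t)$ can be written $x=o+b$ with $o\in O\subseteq K$ and $\|b\|\le t$. Hence $x\in K\oplus B(t)$, and so $O\oplus B(t)\subseteq K\oplus B(t)$. Applying this to each of the three choices of $K$ gives the three displayed inclusions. It is also worth noting the chain $\mathrm{conv}(O)\subseteq\mathrm{obb}$ and $\mathrm{conv}(O)\subseteq\mathrm{box}$, which holds because boxes are convex. This shows the hull is the tightest convex baseline, and the shell sits inside even that one.

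The free-space claim follows by taking complements. If $\mathcal S(t)\subseteq K(t)$, then the free region $W\setminus\mathcal S(t)$ contains $W\setminus K(t)$ in any workspace $W$. So any volume or connectivity measure that is monotone under inclusion is at least as large for the shell. In particular, any path that is feasible against a convex keep-out is also feasible against the shell.

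The main obstacle is interpretive rather than technical. The statement uses the \emph{perceived} field $\hat\varphi$ but a common obstacle set $O$, so I must fix $O$ as the perceived obstacle set from which both $\hat\varphi$ and the convex approximations are built. I also need to handle the convention for the sign of $\hat\varphi$ inside obstacles and for discretization on the voxel grid. On a grid, I will argue the same inclusion voxelwise, using the distance transform as the exact dilation. If the grid distance only approximates the Euclidean distance, I will state the result for the exact field and remark that it holds at voxel resolution.
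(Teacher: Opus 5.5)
Your proposal is correct and follows the paper's proof. Both write each convex keep-out as $C(O)\oplus B(t)$ with $C(O)\supseteq O$, use the fact that Minkowski dilation preserves inclusion, and take complements to get the free-space claim. Your additions are sound refinements that the paper leaves implicit: justifying $\{\hat\varphi\le t\}=O\oplus B(t)$ via closedness and $t\ge 0$, the $\ell_\infty$-inflation variant via $B(t)\subseteq[-t,t]^d$, and the voxel-grid remark.
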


\begin{proof}
Each convex keep-out is $C(O)\oplus B(t)$ for a convex outer approximation
$C(O)\supseteq O$ with $C\in\{\mathrm{hull},\mathrm{obb},\mathrm{box}\}$. Minkowski dilation
preserves inclusion, that is $A\subseteq A'$ implies $A\oplus B(t)\subseteq A'\oplus B(t)$, and
the shell is exactly $O\oplus B(t)$; dilating $O\subseteq C(O)$ by the common $B(t)$ gives
$\mathcal S(t)\subseteq C(t)$ for each such $C$. Taking complements reverses the inclusions,
hence the shell bounds no more free space than any convex keep-out. The guarantee is deterministic, holds at any convex decomposition of
the obstacle, and needs no distributional assumption.
\end{proof}

\begin{remark}
At a matched coverage target each region is calibrated to its own
margin. The shell hugs the obstacle and carries the least built-in slack, and generally needs a
larger margin than a convex outer approximation to reach the same coverage, hence set inclusion
need not hold there. The free-space magnitude at matched coverage is therefore empirical
(Section~\ref{sec:exp}), where concave savings dominate.
\end{remark}

The non-convex shell enters a
sampling-based model predictive control (MPC) planner as the keep-out constraint. Each sampled rollout is penalized
by its penetration of the conformal level set, and the non-convex constraint is handled
by sequential convex programming that places soft slack on the level set. Because the
margin varies at the voxel level and is severity-conditioned, the keep-out tightens and
loosens according to local geometry and visibility, and the planner can thread the concave
free pocket that convex regions seal. The closed-loop behaviour is reported in
Section~\ref{sec:exp}.

\subsection{Per-Voxel Fusion}
\label{sec:method-fusion}
The distance field is fused at the voxel level. For a sensor $m$ returning cell $c$,
split conformal prediction gives the tightest valid single-sensor keep-out
$u_m(c)=\hat\varphi_m(c)+\delta_m$ from that sensor alone, and a sensor that goes blind
in degradation does not return the cell, which makes abstention automatic. The fused
field combines the returned single-sensor keep-outs by the \emph{union},
$U(x_c)=\max\{u_m(c):c\in R_m\}$, and abstains on cells no sensor returns. The role of
fusion is availability, meaning the bounding of cells that no single sensor can bound,
rather than error averaging. Because a cell returned by a single sensor offers no second value
to combine, a per-cell rule can improve on the union only where two or more sensors return the
same cell, and that overlap is empirically narrow (Section~\ref{sec:exp}).

\noindent The overlap of two or more returns spans only $0.5$ to $5\%$ of the band, and
therefore any per-cell reduction rule can beat the union by at most a few percent. The
value of fusion is therefore availability. Certifying the remaining $95\%$ or more of
the band that no sensor returns requires leaving the per-cell frame for a spatial
information set, which the shell geometry supplies.

\subsection{Severity Conditioning}
\label{sec:cond}
The margin $\delta$ controls the size of the shell. Rather than a single global quantile,
the margin scale is calibrated as a continuous function of measured degradation, using a
normalized conformal score $A/\hat\sigma(\beta)$, and the shell then widens smoothly as
visibility drops without any discrete severity bins. The severity covariate is not read
from an oracle. It is estimated from an observable sensor statistic, the lidar return
count, by an isotonic regression, a monotone non-parametric fit encoding the physical prior
that the margin may only grow as returns are lost, which removes the need for weather labels
at deployment. Conditioning therefore sets how large the margin
is, whereas the shell construction sets how that margin is spent.

Conditioning and the per-voxel union change only the nonconformity
score, and validity is obtained from the split-CP quantile on held-out scores per
condition. This is a rigor backbone, namely standard split conformal applied to a new
geometric construction, rather than a new coverage theorem. The contribution is the
non-convex construction, the frame-optimality result, and the problem they solve.

\section{Experiments}
\label{sec:exp}
\subsection{Overview}
The evaluation proceeds in two stages. The first stage is offline and planner-independent,
establishing on real and simulator data that the shell certifies a tighter region at matched
coverage. The second stage is closed-loop, testing whether that tighter region produces safer
flight in a validated simulation harness. The harness is treated as apparatus, and its
fidelity is established before it is used.

\subsection{Experimental Setup}
\label{sec:setup}
\subsubsection{Common Setup}
\paragraph{Apparatus, datasets, and protocol.} Multi-sensor logs and rollouts come from the
layered simulation harness described below. The data are STF~\cite{bijelic2020}, comprising real adverse weather, $n=8{,}970$,
in five modes, simulator per-cell clearance fields from two sensors (lidar and thermal) at
seven MOR levels, and SubT real subterranean LiDAR for
geometry (an accumulated 14-scan local map, a $5$\,m BEV band at $0.10$\,m,
$r_{\text{safe}}{=}0.30$\,m, and each region inflated to its own reach). Folds are
scene-disjoint. Every arm is calibrated per voxel by split conformal to
exactly $0.90$ realized coverage, and per-mode validity is reported as the coverage gap
$\Delta\mathrm{cov}=\mathrm{cov}-(1-\alpha)$, where $0$ is calibrated and a negative value
falls below nominal.
\paragraph{Modality choice.} The second modality is thermal (LWIR), chosen because long-wave
IR penetrates fog and obscurants roughly $2\times$ better than visible or near-IR~\cite{nebuloni2005},
an advantage that narrows in dense fog, giving a failure mode decorrelated from LiDAR's
obscurant scattering. The method does not rely on a fixed spectral ratio, however. Each
modality's degradation is anchored to real degraded-condition data (STF, FIReStereo), and
therefore the fusion benefit rests on empirically grounded per-modality coverage rather than
an assumed extinction magnitude.

\subsubsection{Offline Evaluation}
\paragraph{Regime caveat.} Rows at MOR\,$\le 2$ are backscatter-artifact-dominated,
because MOR\,$1$--$8$ is $10$--$75\times$ denser than real fog. Physical claims are therefore
capped at MOR-8, and heavier rows are labeled stress tests.

\paragraph{Tightness comparison protocol.} Comparing keep-out tightness between sensor
arms is delicate, because a margin swept to a global target $\alpha$ cannot realize the
same coverage in every arm, which leaves per-frame $\delta$ comparisons
coverage-confounded. The comparison is therefore made per cell, using split conformal
calibrated to exact $1-\alpha$ coverage in each arm by construction, and aggregated
throughout the scene, charging the blind cap to unobservable cells. All tightness
numbers use this matched-coverage-by-construction protocol, and the reported quantities
are the per-cell margin, the fraction of the scene an arm can bound, and the whole-scene
keep-out.

\subsubsection{Simulation Harness}
\label{sec:sim}
\begin{figure}[t]\centering
\resizebox{\columnwidth}{!}{%
\begin{tikzpicture}[
  font=\small,
  >={Stealth[length=1.6mm]},
  L/.style={draw=black!55, rounded corners=2pt, align=center, inner sep=3pt, text width=40mm, fill=blue!7},
  O/.style={draw=black!55, rounded corners=2pt, align=center, inner sep=3pt, text width=40mm, fill=black!5},
  R/.style={draw=black!55, rounded corners=2pt, align=center, inner sep=3pt, text width=30mm, fill=black!12},
  ar/.style={-{Stealth[length=1.6mm]}, black!55, thick}
]
\node[L] (l1) {\textbf{L1 -- FlightBench Core}\\[1pt]{\scriptsize base sim, rendering + dynamics + planner}};
\node[L, below=3mm of l1] (l2) {\textbf{L2 -- Sensor Augmentation}\\[1pt]{\scriptsize adds LiDAR, thermal, depth + noise physics}};
\node[L, below=3mm of l2] (l3) {\textbf{L3 -- Degradation Layer}\\[1pt]{\scriptsize fog, smoke, low-light; severity = MOR}};
\node[O, below=3mm of l3] (out) {{\scriptsize synthetic degraded sensor data}};
\draw[ar] (l1) -- (l2); \draw[ar] (l2) -- (l3); \draw[ar] (l3) -- (out);
\begin{scope}[on background layer]
\node[draw=black!40, rounded corners=3pt, fill=blue!3, fit=(l1)(l2)(l3)(out), inner sep=3.5mm] (H) {};
\end{scope}
\node[font=\bfseries, above=0.5mm of H.north] {Simulation Test Harness};
\node[R, right=16mm of H.east, anchor=west, minimum height=20mm] (l4)
  {\textbf{L4 -- Real Degraded Data}\\[2pt]{\scriptsize real corpora: STF (LiDAR), FIReStereo (thermal)}};
\draw[<->, thick, black!60] (H.east) -- node[above, align=center, font=\scriptsize]{realism\\match\\ KS+EMD} (l4.west);
\node[align=center, text=orange!65!black, font=\scriptsize, above=9mm of l4] (fit)
  {fit free params $\theta$\\ $\rightarrow$ tune sim to real};
\draw[->, dashed, orange!65!black] (l4.north) to[out=90,in=-90] (fit.south);
\end{tikzpicture}%
}
\caption{The layered simulation harness. A base simulator gains sensor modalities and a
physics-based degradation layer, and the free parameters are tuned to match real adverse-weather
corpora in distribution.}
\label{fig:harness}
\end{figure}
Because no real corpus carries depth, LiDAR, correlated degradation, and a ground-truth map
simultaneously, the multi-sensor and closed-loop evidence is produced on a layered harness. A
simulation-based safety result depends on whether the degradation is real, whether the ground
truth is independent of the sensor, and whether the guarantee holds on the data, and the harness
is validated against all three before any number from it is used.

\paragraph{\textbullet\ Construction.} The harness has four layers (Figure~\ref{fig:harness}). L1 is the closed-loop core, namely
FlightBench~\cite{flightbench2025} providing Flightmare and Unity rendering, Gazebo dynamics,
and ROS on a headless GPU. L2 synthesizes the sensor fields that a game renderer never
computes. L3 injects degradation on the raw per-modality streams before mapping, governed by
the master knob $\beta$. L4 anchors the free parameters of L3 to real
degraded data, as described below.

\paragraph{\textbullet\ Measurement model and ground truth.} At each replan step the logger degrades all
modalities and records, on a shared world-frame voxel grid ($60\times60\times30$ at $0.15$\,m,
that is $9\times9\times4.5$\,m), the per-modality predicted clearance $\hat\varphi_s$, the
ground-truth clearance $\varphi_{gt}$, the per-cell confidence $\hat\sigma_s$, and an observed
mask. The calibrated quantity is the dangerous error $A_s$ from the problem formulation,
evaluated against $\varphi_{gt}$ on the near band ($\varphi_{gt}<0.6$\,m). The ground truth $\varphi_{gt}$ is an exact Euclidean distance transform computed from
the scene mesh, and the render and degradation path never touches it, which makes sensor and
truth independent. Geometry alignment is validated to roughly one voxel (back-projected depth
$2.8$\,cm, LiDAR $2.3$\,cm median), and the injected degradation matches the Beer--Lambert law
$\tau=e^{-\beta z}$ to $98.9\%$ at MOR3.

\paragraph{\textbullet\ Fidelity validation.} The degradation physics is inherited from established models,
namely Koschmieder and Beer--Lambert extinction~\cite{koschmieder1924,middleton1952vision} and
Hahner-style LiDAR fog~\cite{hahner2021fog}, and only the free parameters $\theta^\ast$ are
tuned to real corpora, using STF~\cite{bijelic2020} for LiDAR fog and
FIReStereo~\cite{firestereo2025} for thermal, and the distributional distance is reported
alongside invariant controls. The invariant-control row, on which non-target modalities move by
$\mathrm{KS}\le0.024$ ($p\ge0.82$), proves the tuning is isolated rather than a global
adjustment (Table~\ref{tab:l4}).

\begin{table}[t]\centering\small
\begin{tabular}{@{}llll@{}}
\toprule
Anchor & Real corpus & Result & \\
\midrule
LiDAR fog & STF & KS $0.138$, EMD $0.066$ & \\
thermal & FIReStereo & KS $0.150$, $A\!\downarrow\!40\%$ & \\
invariant & --- & non-target KS $\le0.024$ & \\
\bottomrule
\end{tabular}
\caption{L4 realism anchoring, where the L3 free parameters are tuned to real degraded data and
the invariant row isolates the tuning.}
\label{tab:l4}
\end{table}

Four readiness checks support calibration, detailed in the technical appendix. Separability holds,
the dangerous error $A$ differing by severity strongly enough to justify per-severity conditioning
rather than assuming it. The exchangeability budget is set by measured intra-trajectory
autocorrelation, handled by multi-seed varied-trajectory sweeps, and sparse-severity bins are
flagged provisional. Calibration and test are split by map, leaving no scene shared, and a
leave-one-severity-out check finds held-out coverage conservative but valid at the $0.90$ target on
the available rows, reported as an instrument-consistency check because physical claims are capped
at MOR-8.

\paragraph{\textbullet\ Scope and reproducibility.} The scope of the claim is deliberate. The
claim covers per-failure-mode, finite-sample coverage on the observed local band, conditional
on $\beta$, and includes the construction and its safety consequence. It does not cover global
or full-mission safety, because coverage is band-local. Only the depth camera is a live
sensor, whereas LiDAR and thermal are geometry-synthesized and carry real-anchored degradation
physics, which makes the multi-sensor result a forward-looking simulation-only probe, the only
setting containing all four required ingredients.

Reproducibility rests on the method, hyperparameters, and evaluation protocol specified in the
paper and the technical appendix.

\subsubsection{Closed-Loop Evaluation}
\label{sec:closedloop}
The closed-loop stage isolates the headline region-shape contribution, whereas the fusion and
conditioning supports rest on the planner-independent offline evaluation. The shell is flown against
the box and convex-hull keep-outs~\cite{tumu2024},
using an identical Fast-Planner~\cite{fastplanner2019} on one real cave (SubT
cave01~\cite{subtmrs2024}) in FlightBench~\cite{flightbench2025}. The intermediate oriented box
appears only in the offline geometry ladder. The planner, controller, map,
and start--goal query are held fixed, and every arm is a conformal keep-out matched to a common
certified coverage (approximately $0.93$, per-arm $\ge0.90$) before flight, and outcome
differences therefore isolate region shape at equal safety. An uncalibrated baseline is not
compared, because matched-coverage comparison needs a coverage guarantee it lacks, and
improvement on uncalibrated planning is already established~\cite{lindemann2023,perceivewc2024}.
Perception is stressed by one fog model
along a ladder (MOR $4/12/20/30/40$\,m), and the reported metric is safe-reach, namely a reached
goal (minimum distance below $1.0$\,m) and a non-negative swept wall-clearance. The world goal is a single fixed point for every
arm, placed beyond every keep-out margin, and only the reachability check differs by margin. The
full protocol, thresholds, and configuration provenance are in the technical appendix.
\subsection{Results}
\subsubsection{Offline Evaluation}
The results in this section are distribution-free and planner-independent, computed offline on
real data (SubT and STF) and simulator-synthesized multi-sensor data. They are the foundation
the paper rests on and hold regardless of the closed-loop harness. The experiments obey one
design law, namely that no two effects share a condition. Fusion has two benefits that need
opposite regimes, coverage in degraded conditions where sensors go blind and tightening in the
clear where co-visible sensors decorrelate, and each is therefore tested in its own regime by
the metric that matches the claim, reported per branch alongside the branch's cell fraction.
The headline geometry claim needs neither regime and is tested on its own.

\paragraph{Shell geometry.}\label{sec:exp-geom} The deterministic $\subseteq$-dominance of the shell is measured here as the gain in usable
observed free space on real subterranean geometry at per-region matched $0.90$ coverage. The
shell recovers $1.10\times$ the free space of the convex hull (Tumu-class), $1.11\times$ the
oriented box, and $1.35\times$ the axis-aligned box (Table~\ref{tab:geometry}, $n=80$ held-out
poses). The headline is computed against the
point-cloud ground truth, which is the conservative choice, because swapping to the
reconstructed mesh ground truth moves every ratio in the shell's favour ($1.11\times$,
$1.13\times$, $1.37\times$, gains of $+0.015$ to $+0.025$), as double-wall reconstruction
phantoms tax the shell $4.7\times$ more than the box, a reach relief of $-0.028$ against
$-0.006$\,m. Reporting therefore uses the ground truth biased against the method, and the
full evaluation-audit trail, including the corrections that reduced the reported numbers, is
in the technical appendix.

\begin{table}[t]
\centering
\begin{threeparttable}
\scriptsize
\setlength{\tabcolsep}{2pt}
\begin{tabular}{l cc cc c}
\toprule
& \multicolumn{2}{c}{point-cloud GT} & \multicolumn{2}{c}{mesh GT} & \\
\cmidrule(lr){2-3}\cmidrule(lr){4-5}
Region & free area$\uparrow$ (m$^2$) & \shell/$\cdot\uparrow$ & free area$\uparrow$ & \shell/$\cdot\uparrow$ & $\Delta$reach (m) \\
\midrule
\ours{\shell{} (ours)} & \ours{37.57} & --- & \ours{38.44} & --- & $-$0.028 \\
hull (Tumu-class) & \snd{34.26} & 1.096$\times$ & \snd{34.60} & 1.111$\times$ & $-$0.017 \\
OBB & 33.73 & 1.114$\times$ & 34.06 & 1.129$\times$ & $-$0.014 \\
AABB (box) & 27.87 & 1.348$\times$ & 27.99 & 1.373$\times$ & $-$0.006 \\
\bottomrule
\end{tabular}
\caption{Keep-out region geometry on real subterranean LiDAR (SubT cave01). Best in bold,
second-best underlined. Correction provenance and audit in the technical appendix.}
\label{tab:geometry}
\end{threeparttable}
\end{table}

\FloatBarrier
\paragraph{Per-voxel fusion.}\label{sec:exp-fusion} Two properties of the union,
coverage (availability) and tightness, are tested here in their own regimes. On the fair per-cell grid, where blind cells are charged the cap, the certified fraction (CERT)
of the union exceeds that of the best single sensor in every condition and is CI-separated,
rising from $0.441$ to $0.620$ in the clear and from $0.265$ to $0.373$ at MOR-8
(Table~\ref{tab:fusion-coverage}). The worst column of that table is the minimum CERT of the
physical fog conditions, and the $\kappa$ legs are a thermal-crossover stress test beyond the
measured $\kappa_{\text{real}}{=}0.81$. The worst-case separation between the union and the best
single sensor strengthens from $+0.03$ to $+0.10$ to $+0.17$ as artifact rows are dropped.

Whether fusion also tightens on co-visible cells was tested by sweeping the cross-sensor error
correlation $\rho$. The predicted direction holds, the union tightening at low $\rho$, but net
system tightening is approximately zero, because tightening needs cells that are both co-seen and
decorrelated and that intersection is only $0.4\%$ of the band ($\rho\approx0.37$ to $0.40$ even in
the clear). The honest conclusion is that fusion contributes coverage while tightness comes from
geometry. The full $\rho$-sweep, the alternative fusion rules, and their pre-registered verdicts
are in the technical appendix. An earlier per-frame comparison at unmatched coverage is
coverage-confounded and not used.

\begin{table}[tbp]
\centering
\begin{threeparttable}
\scriptsize
\setlength{\tabcolsep}{1.7pt}
\renewcommand{\arraystretch}{0.95}
\begin{tabular}{l c c ccc cccc}
\toprule
& & & \multicolumn{3}{c}{fog (physical)} & \multicolumn{4}{c}{thermal crossover $\kappa$} \\
\cmidrule(lr){4-6}\cmidrule(lr){7-10}
Arm & \textbf{worst}$\uparrow$ & clear & MOR-8 & MOR-6 & MOR-4 & 0.60 & 0.35 & 0.15 & 0.05 \\
\midrule
\multicolumn{10}{l}{single-sensor}\\
\quad lidar & 0.086 & 0.211 & 0.121 & 0.106 & 0.086 & 0.298 & 0.297 & 0.294 & 0.289 \\
\quad thermal & 0.017 & 0.441 & 0.265 & 0.207 & 0.150 & 0.194 & 0.119 & 0.051 & 0.017 \\
\midrule
\multicolumn{10}{l}{fusion}\\
\quad \ours{union} & \ours{0.186} & \snd{0.620} & \snd{0.373} & \snd{0.303} & \ours{0.231} & \snd{0.460} & \ours{0.396} & \ours{0.336} & \snd{0.303} \\
\quad \shortstack[l]{naive-\\worst} & \snd{0.180} & 0.626 & 0.375 & 0.307 & \snd{0.230} & 0.472 & \ours{0.396} & \snd{0.335} & 0.302 \\
\quad \shortstack[l]{naive-\\Bonferroni\tnote{a}} & 0.187 & \ours{0.651} & \ours{0.392} & \ours{0.320} & 0.240 & \ours{0.499} & 0.426 & 0.366 & \ours{0.332} \\
\bottomrule
\end{tabular}
\begin{tablenotes}\scriptsize
\item[a] Naive-Bonferroni's higher CERT comes from coverage above nominal (cov $\ge 0.976$
against nominal $0.90$) and looser margins, and is therefore not a fair win at matched coverage.
\end{tablenotes}
\end{threeparttable}
\caption{Fusion coverage necessity, the certified fraction (CERT$\uparrow$) of the near-obstacle
band. Best in bold, second-best underlined. Full grid in the technical appendix.}
\label{tab:fusion-coverage}
\end{table}

\paragraph{Severity conditioning.}\label{sec:exp-cond} On real STF the label-free severity
conditioning matches the per-mode labeled-Mondrian ceiling within CI on every mode, at a
maximum coverage deficit of $0.013$ (Table~\ref{tab:conditioning}), and dense fog rises from a
global-margin coverage of $0.762$ to $0.926$ against the $0.941$ labeled ceiling. This is a
parity result relative to adaptive conformal prediction, which attains the same coverage only
given an online per-frame feedback signal unavailable in single-shot flight, a structural
distinction rather than a validity win.

Conditioning the margin on predicted clearance $\hat\varphi$ tightens the mean margin by $20$
to $40\%$ at matched $0.90$ coverage, and it repairs the far-range conditional-coverage floor
that the unconditioned margin lets collapse, raising it from $0.53$ to $0.95$ at MOR-4
(Table~\ref{tab:conditioning}b). The worst bin is the minimum conditional coverage of the
$\hat\varphi$-sextiles, and these gains survive oracle coverage-matching, where all arms are
rescaled to exactly $0.90$ realized (global $0.209/0.213/0.255$ against Mondrian
$0.186/0.166/0.192$) on $1.01\text{M}/0.40\text{M}/0.30\text{M}$ cells. The smooth continuous
form is not deployed, because a
calibration-only transfer certificate refused it after it undercovered a clutter bin on
held-out maps, and the deployed variant is therefore the gate-certified
Mondrian-on-$\hat\varphi$. Every smooth variant that was tested, and its pre-registered accept
rules and verdicts, appears in the technical appendix.

\begin{table*}[tbp]
\centering
\begin{threeparttable}
\scriptsize
\renewcommand{\arraystretch}{0.95}
\setlength{\tabcolsep}{3pt}
\begin{tabular}{@{}l@{\hspace{0.8em}}l@{}}
\textbf{(a) Severity conditioning} & \textbf{(b) Range conditioning}\\[2pt]
\begin{tabular}[t]{l c cc cc cc c}
\toprule
& & \multicolumn{2}{c}{global} & \multicolumn{2}{c}{Mondrian (lab.)} & \multicolumn{2}{c}{cont.\ $\sigma(\hat\beta)$, l-free} & \\
\cmidrule(lr){3-4}\cmidrule(lr){5-6}\cmidrule(lr){7-8}
Mode & $n$ & $\Delta$cov & $\bar\delta$ & $\Delta$cov & $\bar\delta$ & $\Delta$cov & $\bar\delta$ & cont$-$Mond.\ \ci{2.5}{97.5} \\
\midrule
clear & 624 & $+0.007$ & 2.40 & $-0.003$ & 2.39 & $+0.001$ & 2.48 & $+0.003$ \ci{-0.026}{+0.026} \\
light fog & 354 & $-0.120$ & 2.40 & $-0.069$ & 2.50 & $-0.075$ & 2.67 & $-0.007$ \ci{-0.037}{+0.022} \\
dense fog & 202 & $-0.138$ & 2.40 & $+0.041$ & 3.64 & $+0.026$ & 3.24 & $-0.013$ \ci{-0.026}{0.000} \\
snow & 530 & $-0.017$ & 2.40 & $-0.015$ & 2.41 & $-0.008$ & 2.49 & $+0.006$ \ci{-0.031}{+0.032} \\
rain & 308 & $-0.004$ & 2.40 & $-0.069$ & 2.31 & $-0.069$ & 2.35 & $+0.001$ \ci{-0.061}{+0.079} \\
\bottomrule
\end{tabular}
&
\begin{tabular}[t]{l cc cc c}
\toprule
& \multicolumn{2}{c}{global $\delta$} & \multicolumn{2}{c}{Mondrian-$\hat\varphi$} & \\
\cmidrule(lr){2-3}\cmidrule(lr){4-5}
Condition & $\bar\delta$ & worst$\uparrow$ & $\bar\delta$ & worst$\uparrow$ & tighten\ \ci{2.5}{97.5} \\
\midrule
clear & 0.232 & 0.84 & 0.186 & 0.84 & $+0.046$ \ci{+0.038}{+0.055} \\
MOR-8 & 0.232 & 0.81 & 0.167 & 0.80 & $+0.066$ \ci{+0.059}{+0.074} \\
MOR-4 & 0.322 & 0.53 & 0.192 & 0.95 & $+0.130$ \ci{+0.123}{+0.137} \\
\bottomrule
\end{tabular}
\end{tabular}
\begin{tablenotes}\scriptsize
\item $\bar\delta$ is the mean calibrated one-sided safety margin (m), the conformal buffer on
the dangerous clearance error $A=\max(0,\hat\varphi-\varphi)$ at level $1-\alpha$. Lower is
tighter at equal validity. Bracketed quantities are bootstrap $95\%$ confidence intervals.
\item Panels (a) and (b) attach the guarantee to different units, scene-level worst-case in (a)
and per-voxel in (b), on different datasets, and therefore $\bar\delta$ is not comparable across
panels.
\end{tablenotes}
\end{threeparttable}
\caption{Conditioning the conformal margin, on severity for validity in panel (a) and on range
for tightness in panel (b). Full detail in the technical appendix.}
\label{tab:conditioning}
\end{table*}

\subsubsection{Closed-Loop Evaluation}
\begin{figure}[t]\centering
\includegraphics[width=\columnwidth]{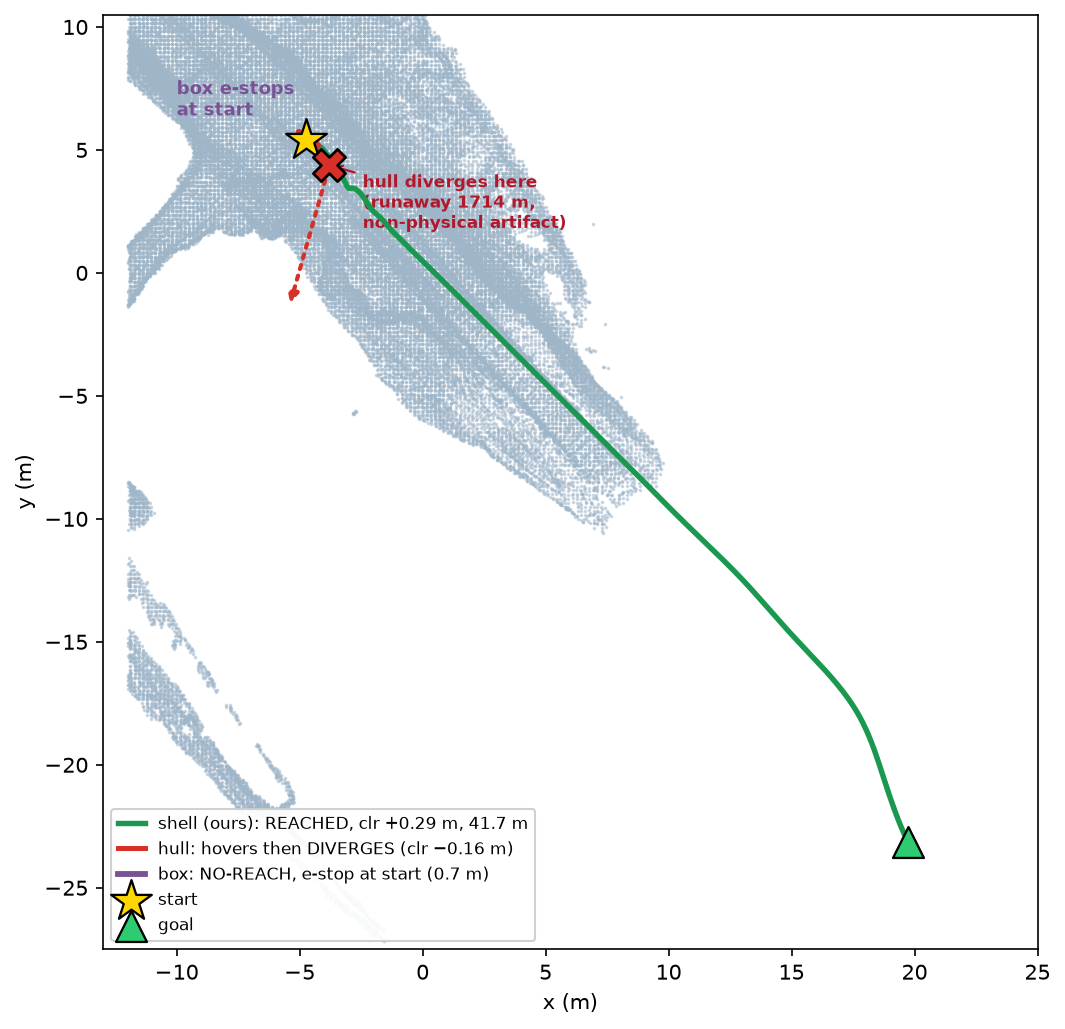}
\caption{Closed-loop flight on the real cave at seed F02 and MOR 4. Only the shell reaches the
goal safely, the hull hovers then diverges, and the box emergency-stops at the start.}
\label{fig:traj}
\end{figure}
All four axes are scored on the reconstructed cloud rather than an arm's own keep-out, namely
safety as collision, reachability as safe-reach (goal reached and no collision), efficiency as
detour, and smoothness as a jerk proxy. Raw reach is
reported only to motivate the correction, namely that the simulator odometry does not halt at
walls and a convex keep-out can otherwise reach the goal after penetrating an obstacle. The
wall-phasing correction removes $7$ raw hull reaches ($42\to35$) and none for the shell
($49\to49$) or the box ($27\to27$).

The controlled contrast is the shell against the box (Figure~\ref{fig:traj}), two stable region operators on the shared
base occupancy, and the separation is decisive, namely safe-reach $49/50=0.98$ (CI $[0.90,1.00]$)
for the shell against $27/50=0.54$ (CI $[0.40,0.67]$) for the box (two-proportion $z=5.15$,
$p<10^{-4}$, Table~\ref{tab:closedloop}). The convex hull sits between at $35/50=0.70$
(CI $[0.56,0.81]$), separated from the shell ($z=3.82$, $p=10^{-4}$) and not from the box
($z=1.65$, $p=0.10$). The shell-against-hull margin has its distribution-free home in the static
geometry ladder, where the shell frees more maneuverable area at matched coverage, and
the closed-loop run corroborates rather than establishes it. Safe-reach is monotone
shell$\ge$hull$\ge$box at every fog severity (Table~\ref{tab:closedloop-grid}).

On this pipeline the shell is uniquely collision-free ($0$ collisions, $0$ divergences) and its
minimum swept clearance never goes negative ($0.285\pm0.021$\,m). The box records $2$ collisions
and the hull $14$ wall-contacts, and both loose regions diverge, the hull on $8$ flights and the
box on $2$. These ten divergences share one deterministic signature. The inflated convex geometry
constricts the exit corridor about $1.5$\,m from the start, and sustained infeasible replanning
there drives Fast-Planner's stock optimizer to a non-physical trajectory, all ten within a narrow
$1.4$ to $1.6$\,m band. The $219$\,m runaway at up to $66$\,m/s, a path far longer than the
roughly $40$\,m cave, is the simulator's ground-truth odometry failing to halt at walls rather
than a physical flight. Shell keeps the corridor open and departs cleanly, never entering this
regime. The shell is also the cheapest region to realize ($0.34$\,ms per map update and flat as fog
thickens, $12\times$ below the hull's $4.01$\,ms, which climbs from $2.97$ to $4.88$\,ms as fog
reveals more geometry and reaches $34.6$\,ms at worst, breaching a $30$\,Hz budget) and calls
the fewest emergency stops ($0.12$ per flight against the box's $1.84$). The box's $0.54$
and hull's $0.70$ against the shell's $0.98$ hold at a configuration
identical for every arm, evidence the outcome tracks region shape rather than tuning. The run is a controlled evaluation at documented configuration (technical appendix),
not a pre-registered confirmatory test, and it differs from the retired preliminary run. The
closed-loop ordering reproduces the static geometry ladder in autonomous flight.

\begin{table}[t]
\centering
\begin{threeparttable}
\scriptsize
\setlength{\tabcolsep}{3pt}
\begin{tabular*}{\columnwidth}{@{\extracolsep{\fill}}l c c c c c@{}}
\toprule
Region & safe-reach\tnote{a} & clearance (m)\tnote{b} & collisions\tnote{c} & e-stops\tnote{d} & compute\tnote{e} \\
\midrule
\ours{\shell{} (ours)} & \ours{49/50 (0.98)} & \ours{0.285\,$\pm$\,0.021} & \ours{0} & \ours{0.12} & \ours{0.34\,ms} \\
convex hull           & 35/50 (0.70)        & 0.208\,$\pm$\,0.165        & 14        & 0.26        & 4.01\,ms       \\
AABB (box)            & 27/50 (0.54)        & 0.280\,$\pm$\,0.046        & 2         & 1.84        & 0.39\,ms       \\
\bottomrule
\end{tabular*}
\begin{tablenotes}\scriptsize
\item[a] Reached goal and non-negative swept wall-clearance.
\item[b] Mean and sd of per-flight minimum clearance of reached flights.
\item[c] Flights recording any negative swept clearance, namely wall contact.
\item[d] Mean emergency-stop recoveries per flight, cap $K{=}3$.
\item[e] Mean per-map-update region-realization time.
\end{tablenotes}
\end{threeparttable}
\caption{Closed-loop region-geometry sweep on SubT cave01 at matched $0.90$ coverage. Best in
bold. Full protocol in the technical appendix.}
\label{tab:closedloop}
\end{table}

\begin{table}[t]
\centering
\begin{threeparttable}
\scriptsize
\setlength{\tabcolsep}{3pt}
\begin{tabular*}{\columnwidth}{@{\extracolsep{\fill}}l c c c c c c@{}}
\toprule
Region & MOR4 & MOR12 & MOR20 & MOR30 & MOR40 & total \\
\midrule
\ours{\shell{} (ours)} & \ours{9} & \ours{10} & \ours{10} & \ours{10} & \ours{10} & \ours{49/50} \\
convex hull           & 8        & 7         & 7         & 7         & 6         & 35/50       \\
AABB (box)            & 6        & 6         & 5         & 5         & 5         & 27/50       \\
\bottomrule
\end{tabular*}
\caption{Safe-reach by fog severity, out of 10 seeds each. Best in bold.}
\label{tab:closedloop-grid}
\end{threeparttable}
\end{table}

\section{Limitations}
\label{sec:limitations}
Coverage is scoped to the observed band, and global-mission safety requires
conservative unknown-space handling. Multi-sensor evidence is semi-synthetic (one
live sensor) and, for the fusion-necessity result, currently a Tier-A corruption
model pending a Tier-B render. The closed-loop result rests on one real cave (SubT cave01) and one planner (Fast-Planner
topological replanning). The hull divergences are geometry-caused rather than an implementation
artifact (Section~\ref{sec:exp}), though the exact optimizer internal that fails during sustained
infeasibility is stock and undocumented, and the corridor-constriction trigger is therefore
claimed confidently while the internal numerics are not. A remaining caveat is that the box and
hull keep-outs share a tile operator that the shell realizes instead by the native signed-distance
path, a possible realization confound.
In-loop coverage sits mildly below nominal ($\approx0.87$--$0.92$).

\section{Conclusion}
\label{sec:conclusion}
Convex conformal keep-outs waste maneuverable space in non-convex clutter, and the
waste compounds as perception degrades and the certified margin grows. This paper contributes a
non-convex signed-distance-field (SDF) shell keep-out, provably $\subseteq$-tighter than any
convex region at equal margin (Proposition~\ref{prop:geom}) and tractable for model predictive
control (MPC), and the margin is spent along obstacle geometry rather than into the corridor.
The margin is fused by a per-voxel union and sized by label-free severity conditioning.
On real subterranean and adverse-weather data the shell tightens the keep-out around obstacles,
freeing about $10\%$ more maneuverable space than the convex hull, the tightest convex baseline,
and $35\%$ more than the box. In closed-loop flight it reaches goals more safely than either.

\clearpage
\bibliography{references}

\end{document}